\documentclass{article}

\usepackage{microtype}
\usepackage{graphicx}
\usepackage{subcaption}
\usepackage{booktabs}
\usepackage{hyperref}
\usepackage[table]{xcolor}

\usepackage[accepted]{icml2026}

\usepackage{amsmath}
\usepackage{amssymb}
\usepackage{mathtools}
\usepackage{amsthm}

\usepackage[capitalize,noabbrev]{cleveref}

\theoremstyle{plain}
\newtheorem{theorem}{Theorem}[section]
\newtheorem{proposition}[theorem]{Proposition}

\theoremstyle{definition}
\newtheorem{definition}[theorem]{Definition}
\newtheorem{assumption}[theorem]{Assumption}
\theoremstyle{remark}
\newtheorem{remark}[theorem]{Remark}

\icmltitlerunning{MIND: Decoupling Model-Induced Label Noise via Latent Manifold Disentanglement}

\begin{document}

\twocolumn[
  \icmltitle{MIND: Decoupling Model-Induced Label Noise via \\ Latent Manifold Disentanglement}

  \begin{icmlauthorlist}
    \icmlauthor{Dayong Ren}{yyy}
  \end{icmlauthorlist}

  \icmlaffiliation{yyy}{State Key Laboratory of Novel Software Technology, Nanjing University, Nanjing 210023, China}

  \icmlcorrespondingauthor{Dayong Ren}{rdyedu@gmail.com}

  \icmlkeywords{Machine Learning, ICML}

  \vskip 0.3in
]
\printAffiliationsAndNotice{}

\begin{abstract}
The paradigm of learning from automatic annotations—driven by pre-trained experts and Foundation Models—dominates data-hungry applications. However, it introduces a critical challenge: model-induced label noise. Unlike stochastic noise in classical robust learning, this noise stems from annotator inductive biases, manifesting as systematic errors tightly coupled with local feature manifolds. Existing methods relying on global transition matrices underfit these structural patterns, while learning instance-specific matrices remains mathematically intractable. We propose Model-Induced Noise Decoupling (MIND), a theoretically grounded framework addressing this dilemma. We demonstrate that the high-dimensional noise manifold can be decoupled into tractable, subspace-dependent components via Latent Manifold Disentanglement. Specifically, our Latent Decoupling Estimator (LDE) dynamically projects samples into latent structural clusters with consistent error modes, facilitating noise identifiability without ground-truth anchor points. To rigorously evaluate robustness, we adopt a hierarchical protocol: moving from controlled noise on CIFAR-100 to a structural stress test on large-scale real-world 3D datasets (S3DIS, ScanNet), where error patterns explicitly couple with geometric manifolds. Empirically, MIND significantly outperforms state-of-the-art methods on these complex benchmarks and effectively corrects zero-shot hallucinations from Vision-Language Models (e.g., OpenSeg), highlighting its potential as a robust distillation framework for Foundation Models.
\end{abstract}

\section{Introduction}
\label{introduction}

The success of deep learning has long been predicated on the availability of large-scale, high-quality annotations. However, as the demand for data scale outstrips the capacity of manual labeling, the field is undergoing a fundamental shift towards learning from automatic annotation. In this paradigm, pre-trained experts or Foundation Models (e.g., CLIP, SAM) are leveraged to generate pseudo-labels for unlabeled data~\cite{kirillov2023segment, sohn2020fixmatch}. While this scalable paradigm effectively circumvents the annotation bottleneck, it introduces a critical, non-trivial distributional shift: model-induced label noise.

Classical Label Noise Learning (LNL) approaches typically assume noise is stochastic and independent of the data features (i.e., class-conditional noise). However, noise generated by an annotator model is inherently systematic and instance-dependent. These errors are not random flips; they are manifestations of the annotator's inductive biases, consistently misclassifying samples that share specific local features (e.g., confusion between texture-less surfaces, or boundary ambiguity in rare views). Consequently, standard methods relying on a global transition matrix $T$~\cite{patrini2017making} or robust representation learning~\cite{huang2023twin, ren2023spiking} fundamentally underfit the complexity of these structural errors.

Conversely, attempting to estimate a unique transition matrix $T(x)$ for every instance $x$ constitutes an ill-posed problem with an intractable parameter space (infinite degrees of freedom). To resolve this dilemma, we must find a middle ground. Building on recent works utilizing manifold-based regularization~\cite{cheng2022instance, shao2021ensemble, qian2025partial}, our key insight is that while the noise is instance-dependent, it is not chaotic. The error patterns are governed by the underlying geometric and feature manifolds of the data. Therefore, to make the estimation of $T(x)$ tractable, we postulate that the high-dimensional noise manifold can be approximated by a linear combination of a finite set of basis transition matrices. By projecting instances into latent subspaces based on their local geometric primitives (e.g., planar, linear, or boundary features), we can decouple the complex global noise into simple, subspace-specific components~\cite{songzigzagpointmamba, guo2024lidar}.

\begin{figure*}[t]
\centering
\includegraphics[width=1\textwidth]{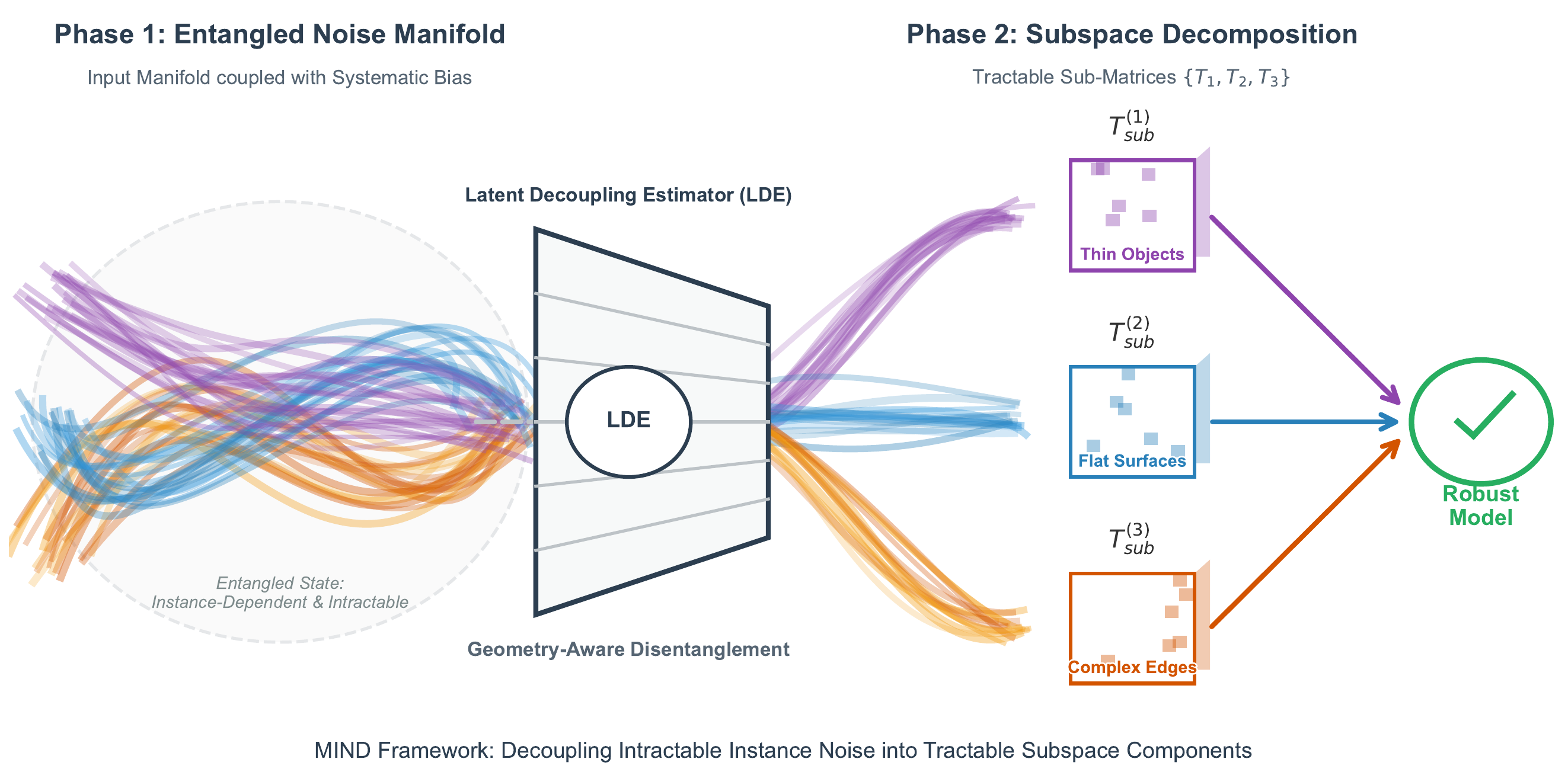}
\caption{\textbf{The Decoupling Paradigm of the MIND Framework.} (Left) The entangled noise manifold renders point-wise estimation of $T(x)$ ill-posed. (Center) The Latent Decoupling Estimator (LDE) disentangles this manifold by enforcing orthogonality in structural subspaces. (Right) We reduce the complex global noise into a linear combination of $K$ tractable basis matrices $\{T^{(k)}\}$, bridging the gap between global and instance-level modeling.}
\vspace{-0.7cm}
\label{teasermap}
\end{figure*}

We realize this insight through the Model-Induced Noise Decoupling (MIND) framework (see Figure~\ref{teasermap}). Central to MIND is the Latent Decoupling Estimator (LDE), a mechanism that dynamically disentangles the feature space into structural clusters (e.g., edges, flat surfaces, thin structures) and estimates a robust transition matrix for each subspace. To rigorously validate the versatility and robustness of MIND, we design a hierarchical experimental protocol. We first provide theoretical verification on controlled 2D benchmarks (CIFAR-100) to demonstrate the identifiability of our latent decoupling. Subsequently, and most importantly, we subject MIND to a structural stress test using large-scale 3D real-world datasets (S3DIS, ScanNet). We argue that 3D data~\cite{ren2022point, chen2024geosegnet, ren2024mffnet, guo2024lidar, songzigzagpointmamba}, with its explicit geometric manifolds and sensor-induced biases, serves as a rigorous proxy for complex high-dimensional structures, representing a ``Hard Case'' for instance-dependent noise learning~\cite{ren2022sae, ren2023spiking}. Success in this domain provides strong evidence of the model's capability to capture non-linear error patterns that generic 2D datasets may not fully expose. Finally, we extend MIND to a Foundation Model setting, correcting zero-shot biases of OpenSeg to demonstrate its potential as a general-purpose distillation framework. Main contributions are summarized as follows:

\begin{itemize}
    \item Theoretical Framework for Model-Induced Noise: We identify and formalize the problem of model-induced label noise. We theoretically show that the intractable instance-dependent transition matrix can be effectively decoupled into tractable, subspace-dependent components via latent manifold disentanglement.
    
    \item MIND and Latent Decoupling Estimator: We propose MIND, a novel framework specifically targeting the geometry-coupled inductive biases of pre-trained annotators. By enforcing semantic orthogonality in the latent space, our LDE module approximates the latent noise structure effectively without requiring clean validation data.
    
    \item Hierarchical Validation and Foundation Model Adaptation: We validate MIND through a rigorous protocol ranging from controlled synthetic noise (CIFAR) to structural stress tests on complex 3D manifolds (S3DIS, ScanNet). Notably, we demonstrate that MIND significantly improves the robustness of Vision-Language Models (e.g., OpenSeg), acting as an effective distillation framework to mitigate zero-shot hallucinations.
\end{itemize}

\section{Methodology}
\label{sec:method}
In this section, we present the Model-Induced Noise Decoupling (MIND) framework. We first formalize learning from automatic annotations as a latent variable model. We then theoretically justify decomposing the intractable instance-dependent noise via low-rank approximation. Finally, we detail the Latent Decoupling Estimator (LDE), a generic mechanism approximating the latent posterior via feature subspace disentanglement, followed by our momentum-based optimization strategy.

\subsection{Problem Formulation}
\label{subsec:problem_setup}
Consider a clean dataset $\mathcal{D} = \{(x_i, y_i)\}_{i=1}^n$ drawn from a joint distribution $\mathcal{P}_{XY}$, where $x \in \mathcal{X}$ is the input data (e.g., images, point clouds, or tokens) and $y \in \{1, \dots, C\}$ is the latent ground-truth label. In the automatic annotation paradigm, we only observe a noisy dataset $\widetilde{\mathcal{D}} = \{(x_i, \tilde{y}_i)\}_{i=1}^n$, where $\tilde{y}$ denotes the pseudo-labels generated by an annotator model. The relationship between the clean and noisy posteriors is governed by the law of total probability via an instance-dependent transition matrix $T(x) \in [0, 1]^{C \times C}$:
\begin{equation}
P(\tilde{y}=j \mid x) = \sum_{i=1}^C T_{ij}(x) P(y=i \mid x).
\label{eq:transition_bridge}
\end{equation}
Here, $T_{ij}(x) = P(\tilde{y}=j \mid y=i, x)$ represents the probability that the annotator flips class $i$ to class $j$ specifically for instance $x$. Directly estimating $T(x)$ is fundamentally ill-posed because the degrees of freedom approach infinity as $x$ varies.

\begin{figure*}[t]
\centering
\includegraphics[width=1\textwidth]{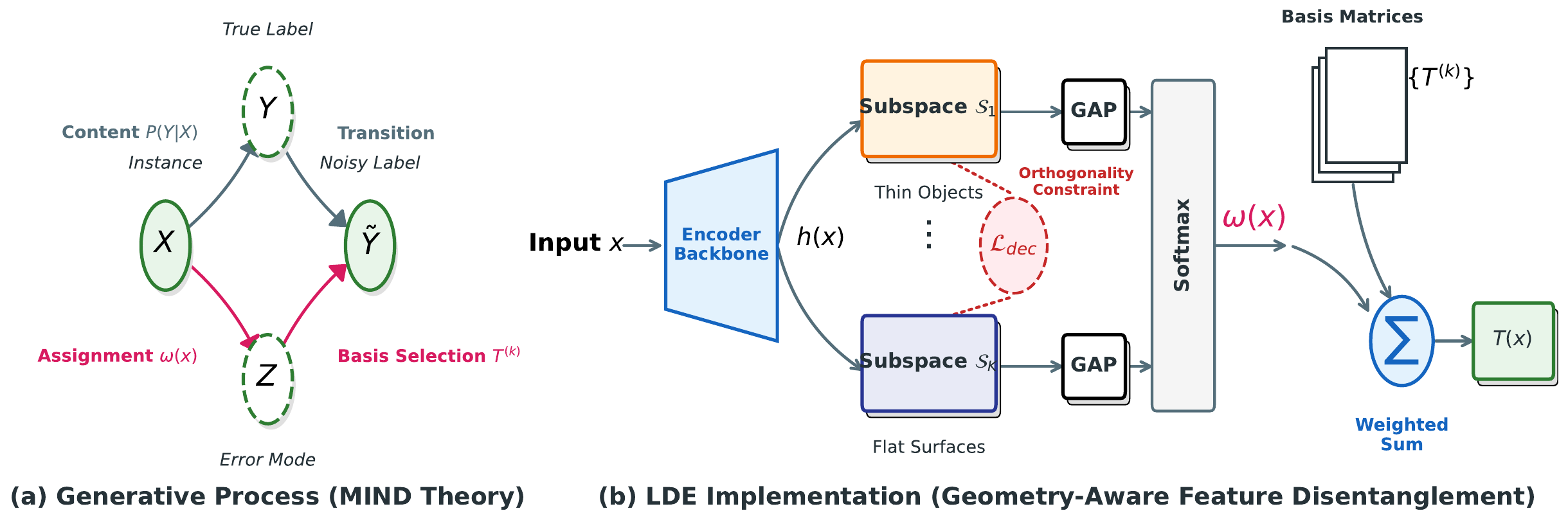}
\caption{\textbf{The proposed Model-Induced Noise Decoupling (MIND) framework.}
(a) Generative Process: We formulate instance-dependent label noise as a structured causal process where the input $X$ determines a latent error mode $Z$ (via assignment $\omega(x)$), which in turn selects specific basis transition matrices to corrupt the true label $Y$.
(b) LDE Implementation: To invert this process, the Latent Decoupling Estimator (LDE) partitions the feature space into orthogonal semantic subspaces (e.g., $\mathcal{S}_1$ for thin objects, $\mathcal{S}_K$ for flat surfaces), supervised by a decoupling loss $\mathcal{L}_{dec}$. The module dynamically computes assignment weights $\omega(x)$ to aggregate learnable Basis Matrices $\{T^{(k)}\}$ via a weighted summation, accurately reconstructing the instance-specific transition matrix $T(x)$.}
\vspace{-0.5cm}
\label{fig:mind_framework}
\end{figure*}

\subsection{MIND: Decoupling via Manifold Consistency}
\label{subsec:mind_theory}
To resolve the non-identifiability, we propose the Model-Induced Noise Decoupling (MIND) framework based on the Manifold Consistency Assumption. We posit that model-induced errors are not arbitrarily chaotic but are governed by latent semantic manifolds (e.g., errors consistently occurring on object boundaries or rare textures).

We introduce a discrete latent variable $z \in \{1, \dots, K\}$ representing the error mode or geometric semantic subspace. We assume the noise generation process follows the conditional independence $\tilde{y} \perp x \mid \{y, z\}$, implying that once the error mode $z$ is known, the noise pattern is fixed. The instance-dependent transition matrix $T(x)$ is derived by marginalizing over $z$:
\begin{equation}
\begin{aligned}
T_{ij}(x) &= \sum_{k=1}^K P(z=k \mid x) \cdot P(\tilde{y}=j \mid y=i, z=k) \\
&= \sum_{k=1}^K \omega_k(x) T_{ij}^{(k)}.
\end{aligned}
\label{eq:decoupling_theory}
\end{equation}
Physically, the term $\omega_k(x) \coloneqq P(z=k|x)$ represents the structural membership probability, where the latent variable $z \in \{1, \dots, K\}$ denotes the index of the structural noise pattern captured by each subspace. This formulation treats the complex, instance-dependent noise transition as a dynamic mixture of $K$ basis components, where $K$ is chosen to reflect the geometric complexity of the data manifold. Specifically, $\omega_k(x)$ quantifies the degree to which instance $x$ belongs to the $k$-th latent noise manifold. For example, if $k=1$ corresponds to ``boundary ambiguity,'' then $\omega_1(x)$ will be high for instances located at object edges. 

Thus, $\omega(x)$ acts as a soft gate that dynamically synthesizes the global noise matrix $T(x)$ by mixing $K$ tractable basis matrices $\{T^{(k)}\}_{k=1}^K$. This approach effectively reduces the estimation complexity from infinite-dimensional to $\mathcal{O}(K \cdot C^2)$. While the true noise manifold $\mathcal{M}$ is continuous, estimating an unconstrained $T(x)$ is an ill-posed functional estimation problem. Our formulation in Eq.~(\ref{eq:decoupling_theory}) effectively serves as a \textit{Piecewise Low-Rank Approximation} of this continuous manifold. Instead of simplistic discretization, we model the noise transition as a dynamic convex combination of $K$ learnable basis matrices $\{T^{(k)}\}$. The soft assignment weights $\omega(x)$ act as interpolation coefficients, enabling the reconstruction of continuous, instance-specific transition patterns from a finite set of geometric Mode Anchors. This converts the intractable estimation of a continuous function into a tractable subspace recovery problem, with approximation precision bounded by the subspace capacity $K$.

\subsection{Theoretical Analysis: Identifiability \& Bounds}
\label{subsec:theory}

A core challenge in instance-dependent label noise is the non-identifiability of $T(x)$ without additional constraints. Here, we establish that our subspace-based decomposition facilitates identifiability by shifting the structural requirements from the raw input space to the learnable latent space.

\begin{assumption}[Anchor Subspace in Latent Feature Space]
\label{ass:anchor_subspace}
Let $h(x) \in \mathcal{H}$ denote the latent feature representation of instance $x$. We assume that for each latent error mode $k \in \{1, \dots, K\}$, there exists a non-empty subset of the feature space $\mathcal{H}_k \subset \mathcal{H}$ (referred to as the Latent Anchor Subspace), such that for any instance mapping to this region (i.e., $h(x) \in \mathcal{H}_k$), the assignment probability satisfies $P(z=k|x) = 1$, and the induced transition matrix $T^{(k)}$ is diagonally dominant.
\end{assumption}

This assumption relaxes the strict requirement of pure anchors in the raw input domain $\mathcal{X}$, postulating instead that the encoder can disentangle distinct error patterns into separable feature regions.

\begin{theorem}[Identifiability of MIND]
\label{thm:identifiability}
Under Assumption~\ref{ass:anchor_subspace}, if the rank of the transition matrices is full, then the set of basis transition matrices $\{T^{(k)}\}_{k=1}^K$ and the assignment functions $\{\omega_k(x)\}_{k=1}^K$ are uniquely identifiable up to a permutation of the latent indices $k$.
\end{theorem}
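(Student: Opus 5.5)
The argument proceeds in two stages. First we recover each basis matrix $T^{(k)}$ from the anchor subspace $\mathcal{H}_k$. Then we recover the weights $\omega(x)$ at every other instance by solving a linear system.

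\textbf{Stage 1: recovering the basis matrices.} Fix $k$ and take any $x$ with $h(x)\in\mathcal{H}_k$. By Assumption~\ref{ass:anchor_subspace} we have $\omega_k(x)=1$, so Eq.~(\ref{eq:decoupling_theory}) reduces to $T(x)=T^{(k)}$. Eq.~(\ref{eq:transition_bridge}) then becomes
\[
p_{\tilde y}(x)^\top = p_y(x)^\top T^{(k)},
\]
where $p_{\tilde y}(x)$ and $p_y(x)$ are the noisy and clean posterior vectors. The noisy posterior is observable, but the clean one is not. To identify $T^{(k)}$ I would make two standard additional assumptions, stated explicitly since the theorem leaves them implicit:
\begin{itemize}
\item the clean posteriors $\{p_y(x): h(x)\in\mathcal{H}_k\}$ span $\mathbb{R}^C$;
\item these posteriors include near-deterministic points for every class, i.e.\ instances with $P(y=i\mid x)\to1$.
\end{itemize}
At such a point for class $i$, the observed $p_{\tilde y}(x)$ equals the $i$-th row of $T^{(k)}$. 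Diagonal dominance tells us which observed row belongs to which class $i$: the largest entry of row $i$ sits at column $i$. This is what removes the usual permutation ambiguity between true and noisy labels.

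If we want to avoid pure per-class anchors, there is a fallback. Full rank of $T^{(k)}$ makes the map $p\mapsto T^{(k)\top}p$ injective, so the observed set of noisy posteriors on $\mathcal{H}_k$ is a bijective linear image of the probability simplex. The vertices of its convex hull are then the rows of $T^{(k)}$, and diagonal dominance again fixes their order. Either way, $T^{(k)}$ is uniquely determined by the distribution of $(h(x),\tilde y)$ on $\mathcal{H}_k$.

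\textbf{Stage 2: recovering the weights.} For a general instance $x$, write
\[
T(x)=\sum_k\omega_k(x)T^{(k)}.
\]
Uniqueness of $\omega(x)$ requires that the matrices $T^{(1)},\dots,T^{(K)}$ be linearly independent in $\mathbb{R}^{C\times C}$. I read the theorem's ``rank of the transition matrices is full'' as covering this: the $C^2\times K$ matrix obtained by stacking $\mathrm{vec}(T^{(k)})$ as columns has rank $K$. Under that reading, $\omega(x)$ is the unique solution of a linear system once $T(x)$ is known, and in addition $\omega(x)\in\Delta^{K-1}$.

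\textbf{Main obstacle.} The hard part is that $T(x)$ itself is never observed. Only the product $p_y(x)^\top T(x)$ is observed, and it is only $C$-dimensional. I would handle this by identifying $T(x)$ through the latent structure $h$: the weights $\omega$ depend on $x$ only through $h(x)$, and the clean posterior varies within each level set of $h$. Pooling instances that share the same $h$-value and the same $\omega$ gives a family of equations $p_{\tilde y}^\top=p_y^\top M$ with a single unknown matrix $M$. Applying the same spanning/anchor argument as in Stage 1 to this family identifies $M=T(x)$, and the linear system then identifies $\omega(x)$.

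\textbf{Permutation ambiguity.} The labelling of the anchor regions $\mathcal{H}_k$ is arbitrary. Any relabelling permutes the $T^{(k)}$ and the $\omega_k$ simultaneously, so identifiability holds exactly up to a permutation of $k$, as stated.
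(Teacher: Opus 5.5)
Your two-stage plan matches the paper's sketch: collapse the mixture on $\mathcal{H}_k$, read off $T^{(k)}$ using diagonal dominance, then solve linearly for $\omega(x)$. You also correctly locate the weak point that the paper passes over with ``linear decomposition of the observed confusion patterns'': $T(x)$ is never observed.

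\textbf{The Stage~1 fallback.} It is not an alternative to per-class anchors. The observed noisy posteriors on $\mathcal{H}_k$ are the image under $p\mapsto T^{(k)\top}p$ of the set of clean posteriors on $\mathcal{H}_k$, not of the whole simplex. The vertices of their hull are therefore the rows of $T^{(k)}$ only if the clean posteriors reach every vertex $e_i$, which is exactly the anchor condition.

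Spanning alone does not suffice. Suppose all clean posteriors on $\mathcal{H}_k$ satisfy $p_i\ge\delta$, and take $A=(1-\eta)I+\frac{\eta}{C}\mathbf{1}\mathbf{1}^\top$ with $\eta\le C\delta$. Then the pair $AT^{(k)}$, $p^\top A^{-1}$ is again row-stochastic and full rank, and for small $\eta$ it is still diagonally dominant. It reproduces every $p_{\tilde y}(x)$, so $T^{(k)}$ is not identified.

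What the hull argument really does is make your main route rigorous. Pointing at instances with $P(y=i\mid x)\to 1$ identifies nothing by itself, because a competing model may place its anchors at different instances. The hull argument instead shows that every model with anchors on $\mathcal{H}_k$ has the hull's vertices as its rows, and diagonal dominance then orders them.

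\textbf{The genuine gap is Stage~2.} At one instance you observe $C-1$ numbers but face $(C-1)+(K-1)$ unknowns. Without extra structure the claim about $\omega$ is false. Take $C=K=2$, with $T^{(1)}$ having rows $(0.9,0.1),(0.1,0.9)$ and $T^{(2)}$ having rows $(0.8,0.2),(0.3,0.7)$. Both are strictly diagonally dominant, invertible, linearly independent, and identified from their anchor regions. At a non-anchor $x_0$, every $\omega(x_0)=(w,1-w)$ with $w\in[0,1]$ reproduces $p_{\tilde y}(x_0)=(0.5,0.5)$, by taking $P(y=1\mid x_0)=(0.2+0.2w)/(0.5+0.3w)$.

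Your pooling step escapes this only by assuming, through the Stage~1 argument, near-deterministic instances of every class inside every fiber of $h$. Each fiber then behaves like an anchor region with $T(x)$ in place of $T^{(k)}$. That is far stronger than Assumption~\ref{ass:anchor_subspace}, and it is at odds with how $h$ is used. If $h$ is sufficient for $y$, as a classifier built on $h$ presupposes, then $p_y$ is constant on fibers. Each fiber then contributes one equation, and the counterexample applies verbatim.

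The hull argument on a fiber also needs $T(x)$ to be invertible. Full rank of the $T^{(k)}$ does not give this for their mixtures unless diagonal dominance is meant strictly; strict dominance survives convex combination and forces invertibility.

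So you are proving a different theorem under much stronger hypotheses. Given the counterexample, some strengthening is unavoidable, but it should be stated as such rather than offered as a proof of the result as written. Finally, both you and the paper treat $\mathcal{H}_k$ as known. Genuine identifiability must also rule out an observationally equivalent model whose anchor regions sit elsewhere.
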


\textit{Proof Sketch.} (See Appendix A for full proof) By restricting the analysis to the latent anchor subspace $\mathcal{H}_k$, the mixture model collapses to a single component $T(x) \equiv T^{(k)}$. Given the diagonal dominance, $T^{(k)}$ is uniquely recoverable from the noisy marginals within this subspace. Once $\{T^{(k)}\}$ are identified, the mixing weights $\omega_k(x)$ for non-anchor points can be uniquely solved via linear decomposition of the observed confusion patterns. \hfill

Furthermore, we analyze the approximation error bounded by the number of subspaces $K$.

\begin{proposition}[Approximation Error Bound]
\label{prop:error_bound}
Let $T^*(x)$ be the ground-truth instance-dependent transition matrix lying on a latent manifold $\mathcal{M}$. Let $\hat{T}_K(x)$ be the approximation by MIND with $K$ bases. Assuming $\mathcal{M}$ is Lipschitz continuous with constant $L$, the approximation error is bounded by:
\begin{equation}
\mathbb{E}_x \| T^*(x) - \hat{T}_K(x) \|_F \le \mathcal{O}(L \cdot K^{-1/D_{\mathcal{M}}}),
\end{equation}
where $D_{\mathcal{M}}$ is the intrinsic dimension of the noise manifold.
\end{proposition}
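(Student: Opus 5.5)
The plan is to read the bound as a covering-number argument on the latent manifold, exhibiting one admissible MIND approximant that attains the rate. The statement is loose, so I first fix an interpretation. There is a map $\phi:\mathcal{X}\to\mathcal{M}$, with $\mathcal{M}$ compact and of intrinsic dimension $D_{\mathcal{M}}$, and $T^*(x)=\Phi(\phi(x))$ for some map $\Phi:\mathcal{M}\to[0,1]^{C\times C}$. The phrase ``$\mathcal{M}$ is Lipschitz continuous with constant $L$'' I take to mean $\|\Phi(u)-\Phi(v)\|_F\le L\, d_{\mathcal{M}}(u,v)$. Finally, $\hat{T}_K$ is the best approximation within the model class of Eq.~(\ref{eq:decoupling_theory}), i.e.\ $\hat T_K(x)=\sum_k\omega_k(x)T^{(k)}$ with $\omega(x)$ in the simplex and each $T^{(k)}$ row-stochastic. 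Since $\hat T_K$ is an infimum over this class, it suffices to construct one member of the class with the claimed error.

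\textbf{Step 1 (covering).} Because $\mathcal{M}$ is compact with intrinsic dimension $D_{\mathcal{M}}$, its covering numbers satisfy $N(\mathcal{M},\epsilon)\le c_{\mathcal{M}}\,\epsilon^{-D_{\mathcal{M}}}$ for small $\epsilon$. This holds, for instance, for a compact Riemannian submanifold, or more generally whenever the upper Minkowski dimension equals $D_{\mathcal{M}}$. Choosing $\epsilon_K=(c_{\mathcal{M}}/K)^{1/D_{\mathcal{M}}}$ gives $K$ centers $u_1,\dots,u_K$ whose $\epsilon_K$-balls cover $\mathcal{M}$.

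\textbf{Step 2 (construction).} Set $T^{(k)}:=\Phi(u_k)$. These are genuine transition matrices, so they are admissible basis matrices. For the weights, take a Voronoi partition of $\mathcal{M}$ with respect to the centers and let $\omega_k(x)=\mathbb{1}[\phi(x)\in V_k]$. Hard assignments are vertices of the simplex, so they are admissible. One could alternatively use a partition of unity subordinate to the cover, which gives continuous soft gates.

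\textbf{Step 3 (pointwise bound).} By convexity of the norm,
\[
\|T^*(x)-\hat T_K(x)\|_F\le\sum_k\omega_k(x)\,\|\Phi(\phi(x))-\Phi(u_k)\|_F .
\]
Every $k$ with $\omega_k(x)>0$ satisfies $d_{\mathcal{M}}(\phi(x),u_k)\le\epsilon_K$, so the right-hand side is at most $L\epsilon_K$. This bound is uniform in $x$, so taking the expectation over $x$ gives $\mathbb{E}_x\|T^*-\hat T_K\|_F\le L\,c_{\mathcal{M}}^{1/D_{\mathcal{M}}}K^{-1/D_{\mathcal{M}}}=\mathcal{O}(LK^{-1/D_{\mathcal{M}}})$. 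The hidden constant depends only on the geometry of $\mathcal{M}$.

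\textbf{Main obstacle.} The hard part is not the calculation but making the hypotheses precise enough for the covering bound to hold. One issue is the meaning of ``intrinsic dimension'': I would state explicitly that the covering-number estimate is assumed, or derive it from compactness plus bounded curvature via a volume comparison argument. The other is that the bound concerns the \emph{best} $K$-basis approximant, not the learned one. I would note that the estimation error of the learned LDE is a separate term, not controlled here.
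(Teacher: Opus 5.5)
The paper gives no proof of Proposition~2.3 to compare with. It is stated in Section~2.3 without argument, and Appendix~A.2 only argues informally that $D_{\mathcal{M}}$ is small enough for $K=16$ to suffice. The single proof in Appendix~A concerns the $\epsilon$-approximate anchor proposition. Your covering argument therefore supplies the missing proof rather than following or departing from one, and under the interpretation you fix it is correct. It also gives more than the statement:
\begin{itemize}
\item the bound is uniform in $x$ rather than only in mean;
\item the constant $c_{\mathcal{M}}^{1/D_{\mathcal{M}}}$ is explicit;
\item hard piecewise-constant gates already attain the rate, so the rate itself does not come from soft interpolation;
\item you rightly note that only the best $K$-basis approximant is controlled, not the learned LDE, whose estimation error needs a separate term.
\end{itemize}

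A few things need tightening.

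\textbf{Intrinsic dimension.} Your parenthetical ``more generally whenever the upper Minkowski dimension equals $D_{\mathcal{M}}$'' is too weak. Box-counting dimension $D_{\mathcal{M}}$ only gives $N(\mathcal{M},\epsilon)\le\epsilon^{-D_{\mathcal{M}}-\eta}$ for each $\eta>0$ and small $\epsilon$; growth like $\epsilon^{-D_{\mathcal{M}}}\log(1/\epsilon)$ is compatible with it. That yields only the rate $K^{-1/(D_{\mathcal{M}}+\eta)}$. For exactly $K^{-1/D_{\mathcal{M}}}$ you must assume $\limsup_{\epsilon\to 0}\epsilon^{D_{\mathcal{M}}}N(\mathcal{M},\epsilon)<\infty$, which does hold for compact Riemannian submanifolds.

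\textbf{Codomain of $\Phi$.} You let $\Phi$ take values in $[0,1]^{C\times C}$ but then call $\Phi(u_k)$ genuine transition matrices. Either assume $\Phi$ maps into the closed set of row-stochastic matrices, or replace each center by a point of $\phi(\mathcal{X})$ in its ball, which at most doubles the radius.

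\textbf{Model class.} The implemented gate is a softmax, hence strictly positive, so one-hot and compactly supported gates are only limits of admissible ones. That is harmless for an infimum: putting mass $1-\delta$ on the correct cell costs at most $\delta\max_{j,k}\|T^{(j)}-T^{(k)}\|_F$, which is exactly the contamination estimate of Appendix~A.1.

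More importantly, with arbitrary simplex-valued gates the class is so rich that the rate says nothing for large $K$. Every row-stochastic $T$ equals $\sum_{f}\bigl(\prod_{i} T_{i f(i)}\bigr)E_f$, where $f$ ranges over the $C^C$ maps $\{1,\dots,C\}\to\{1,\dots,C\}$ and $E_f$ is the $0/1$ matrix with $(E_f)_{i f(i)}=1$. With these bases and gates $\omega_f(x)=\prod_i T^*_{i f(i)}(x)$ the error is exactly zero, with no Lipschitz or dimension hypothesis. The $K^{-1/D_{\mathcal{M}}}$ rate is informative only for local gates, in which each $x$ mixes only bases anchored near $\phi(x)$. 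Your Voronoi and partition-of-unity gates are of this type, so define the model class that way.
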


\begin{remark}[Relaxation via Latent Neural Collapse]
\label{rem:relaxation}
One might question the strictness of Assumption~\ref{ass:anchor_subspace} in complex, high-noise real-world data (e.g., ScanNet-PCAM), where raw inputs may not possess pure geometric Mode Anchors. Crucially, however, our assumption is imposed on the \textit{latent feature space} $\mathcal{H}$, not the fixed raw input space $\mathcal{X}$. Deep neural networks naturally exhibit the phenomenon of \textit{Neural Collapse} during training, where intra-class features converge to low-dimensional subspaces. Our LDE module actively enforces this separation via the orthogonality objective $\mathcal{L}_{dec}$. Consequently, the network \textit{learns} to construct the anchor subspaces required for identifiability. Even in regimes where the assumption holds only approximately (i.e., $P(z=k|x) \approx 1$), the soft assignment mechanism $\omega(x)$ enables robust interpolation, making the framework resilient to local violations of the strict anchor condition.
\end{remark}

\subsection{Latent Decoupling Estimator (LDE)}
\label{subsec:lde}

Unsupervised learning of disentangled representations has been extensively surveyed~\cite{eddahmani2023unsupervised} and applied across various domains, ranging from generative modeling~\cite{lin2019infogan, hsu2023disentanglement} to frequency-based model inversion~\cite{yang2025enhanced}. However, adapting these principles to discriminative noise correction remains underexplored. Unlike recent instance-dependent label noise methods that rely on generating separate style and content representations~\cite{deng2025instance}, our LDE acts as a direct structural proxy to capture geometric error modes. To realize this, the core challenge in our framework is to learn an assignment $\omega(x)$ that accurately reflects the underlying error structure. We propose the Latent Decoupling Estimator (LDE) to address this via explicit subspace disentanglement. Our design is motivated by the observation that model-induced errors are inherently coupled with local geometric primitives. For instance, annotators often exhibit consistent failure modes on specific structures (e.g., beam divergence on ``thin objects'') to capturing geometric error modes. To realize this,we remain robust on others (e.g., ``planar surfaces''). Consequently, we posit that by enforcing orthogonality among feature subspaces, we can effectively disentangle these distinct geometric attributes, acting as a structural proxy to isolate corresponding error patterns into tractable clusters.

Formally, let $h(x) \in \mathbb{R}^D$ be the high-dimensional feature representation extracted by the backbone. We impose a structural constraint that partitions the feature dimensions into $K$ disjoint subspaces $\mathcal{S}_1, \dots, \mathcal{S}_K$. To ensure reproducibility and eliminate optimization instability associated with dynamic routing, we adopt a deterministic physical partitioning strategy. The feature vector $h(x)$ is sliced along the channel dimension into $K$ equal-sized contiguous chunks (implemented as \texttt{channels.chunk(K)} in practice). Mathematically, the $k$-th subspace $\mathcal{S}_k$ corresponds to a fixed set of indices:
\begin{equation}
\mathcal{S}_k = \left\{ d \in \mathbb{N} \;\middle|\; (k-1)\frac{D}{K} < d \leq k\frac{D}{K} \right\}.
\label{eq:subspace_partition}
\end{equation}
Crucially, while the physical indices of these subspaces are fixed, their latent content is not. This rigid structural prior, when combined with the subsequent orthogonality objective, forces the encoder to actively self-organize and route distinct features into these pre-defined latent slots. We argue that while high-level semantic concepts (e.g., ``Chair'' vs. ``Railing'') are distinct, they are composed of shared low-level geometric primitives (e.g., ``thin cylindrical structures''). By enforcing orthogonality, the LDE implicitly encourages the disentanglement of these shared geometric primitives rather than disjoint semantic categories. This allows the model to capture the common error mode (e.g., point cloud sparsity on thin structures) shared across different semantic classes. To ensure that these subspaces specialize in distinct structural information, we maximize feature orthogonality. We model the affinity between any two feature dimensions $u$ and $v$ using their cosine similarity $s_{uv} = \frac{\mathbf{w}_u^T \mathbf{w}_v}{\|\mathbf{w}_u\| \|\mathbf{w}_v\|}$, where $\mathbf{w}$ represents the corresponding basis vector. The decoupling objective $\mathcal{L}_{dec}$ is formulated to maximize intra-subspace cohesion while minimizing inter-subspace affinity:
\begin{equation}
\mathcal{L}_{\text{dec}} = - \frac{1}{K} \sum_{k=1}^K \log \left( \frac{\sum_{u, v \in \mathcal{S}_k, u \neq v} \exp(s_{uv}/\tau)}{\sum_{u \in \mathcal{S}_k} \sum_{w \notin \mathcal{S}_k} \exp(s_{uw}/\tau) + \epsilon} \right),
\label{eq:loss_decouple}
\end{equation}
where $\tau$ is a temperature hyperparameter. This objective forces the backbone to learn disentangled features, ensuring that the latent space mirrors the distinct error modes. Finally, the assignment score $\omega_k(x)$ is computed by aggregating the activation magnitude within each subspace:
\begin{equation}
\omega_k(x) = \text{Softmax}_k \left( \frac{1}{|\mathcal{S}_k|} \sum_{d \in \mathcal{S}_k} |h_d(x)| \right).
\end{equation}
This assignment acts as a soft gate, dynamically synthesizing the instance-dependent transition matrix based on the identified geometric semantics.

\subsection{Momentum-Based Basis Estimation}
\label{subsec:optimization}
With the assignment $\omega(x)$ determined by the LDE, we need to estimate the basis matrices $\{T^{(k)}\}$. Since clean labels are unavailable, we employ a ``Self-Paced'' estimation strategy using the model's own predictions $\hat{y} = \arg\max f_\theta(x)$ as a proxy for ground truth. Directly computing the expectation in Eq. (\ref{eq:decoupling_theory}) over the entire dataset is computationally prohibitive. Instead, we propose an Online Momentum Update rule. For each mini-batch $\mathcal{B}$ at step $t$, we first compute a batch-wise estimate $\hat{T}^{(k)}_{batch}$ for each subspace $k$:
\begin{equation}
(\hat{T}^{(k)}_{batch})_{ij} = \frac{\sum_{x \in \mathcal{B}} \omega_k(x) \cdot \mathbb{I}(\hat{y}=i, \tilde{y}=j)}{\sum_{x \in \mathcal{B}} \omega_k(x) \cdot \mathbb{I}(\hat{y}=i) + \epsilon}.
\end{equation}
Then, we update the global basis matrices via a moving average to ensure stability against batch noise:
\begin{equation}
T^{(k)}_{t} = \alpha T^{(k)}_{t-1} + (1 - \alpha) \hat{T}^{(k)}_{batch},
\label{eq:momentum_update}
\end{equation}
where $\alpha \in [0, 1)$ is a momentum coefficient. This dynamic estimation allows the basis matrices to evolve progressively as the classifier $f_\theta$ improves, forming a robust feedback loop. Finally, the total training objective combines the noise-corrected classification loss with decoupling regularization:
\begin{equation}
\mathcal{L}_{\text{total}} = \mathcal{L}_{\text{CE}}\left( \sum_{k=1}^K \omega_k(x) T^{(k)} f_\theta(x), \tilde{y} \right) + \lambda \mathcal{L}_{\text{dec}}.
\end{equation}

\section{Experiments}
\label{sec:experiments}
We conduct a comprehensive evaluation to validate the theoretical properties and practical effectiveness of the MIND framework. Adopting a hierarchical validation protocol, our experiments are structured to substantiate four central claims: (1) Identifiability: MIND accurately recovers the latent noise transition matrix in controlled environments; (2) Structural Robustness: The framework effectively mitigates systematic, geometry-coupled errors in large-scale 3D benchmarks; (3) Generalization: It transfers robustly across unseen domains and diverse architectures; and (4) Correction: It rectifies the specific inductive biases of Foundation Models (e.g., OpenSeg).

\subsection{Experimental Setup}

\paragraph{Datasets.}
We utilize CIFAR-100 for controlled theoretical verification. For the structural stress test, we employ two large-scale 3D benchmarks: S3DIS~\cite{armeni20163d} (271 indoor scenes across 13 classes) and ScanNetV2~\cite{dai2017scannet} (1513 diverse scenes across 20 classes)~\cite{ren2022point, chen2024geosegnet, ren2024mffnet}.
\vspace{-0.2cm}

\paragraph{Model-Induced Noise Simulation.}
To faithfully reproduce the Automatic Annotation era, we generate labels using pre-trained annotators rather than random flipping. This introduces systematic, instance-dependent errors: 
(i) KPConv~\cite{thomas2019kpconv} (Low Noise, $\sim$20\%): Represents local geometric errors typical of kernel-based convolutions; 
(ii) MPRM~\cite{boulch2017unstructured} (Mid Noise, $\sim$35\%): A multi-view projection network that introduces view-dependent inconsistencies; 
(iii) PCAM~\cite{zhao2021point} (High Noise, $\sim$40\%): A Point Transformer exhibiting severe semantic confusion on texture-less surfaces, serving as our hardest structural bias case.
\vspace{-0.2cm}

\begin{table*}[t!]
\caption{\textbf{Semantic segmentation mIoU (\%) on S3DIS (Area 5) and ScanNetV2 (Val) under Model-Induced Noise.} MIND achieves the best performance across all settings. Underlined values denote the second-best results. \textbf{Please refer to Figure~\ref{fig:convergence} for the visualization of training dynamics.}}
\centering
\scalebox{0.9}{
\begin{tabular}{l|ccc|ccc}
\toprule
Dataset & \multicolumn{3}{c|}{S3DIS (Indoor)} & \multicolumn{3}{c}{ScanNetV2 (Rich Scene)} \\ \cmidrule{1-7}
Noise Source & KPConv & MPRM & PCAM & KPConv & MPRM & PCAM \\ \midrule
Standard CE & 63.51 \scriptsize{$\pm$0.2} & 37.24 \scriptsize{$\pm$0.4} & 33.24 \scriptsize{$\pm$0.3} & 65.24 \scriptsize{$\pm$0.3} & 40.41 \scriptsize{$\pm$0.2} & 36.40 \scriptsize{$\pm$0.2} \\
GCE~\cite{zhang2018generalized} & 57.70 \scriptsize{$\pm$0.3} & 36.01 \scriptsize{$\pm$0.4} & 31.24 \scriptsize{$\pm$0.2} & 44.09 \scriptsize{$\pm$0.2} & 23.28 \scriptsize{$\pm$0.1} & 17.99 \scriptsize{$\pm$0.1} \\
BLTM (ICML'22)~\cite{yang2022estimating} & 62.40 \scriptsize{$\pm$0.3} & 37.50 \scriptsize{$\pm$0.2} & 33.10 \scriptsize{$\pm$0.3} & 64.80 \scriptsize{$\pm$0.2} & 39.90 \scriptsize{$\pm$0.2} & 36.50 \scriptsize{$\pm$0.3} \\
NPC (ICML'22)~\cite{bae2022noisy} & 64.35 \scriptsize{$\pm$0.1} & \underline{40.01} \scriptsize{$\pm$0.2} & 36.13 \scriptsize{$\pm$0.3} & 65.90 \scriptsize{$\pm$0.2} & 41.05 \scriptsize{$\pm$0.2} & 37.10 \scriptsize{$\pm$0.2} \\
DMLP (CVPR'23)~\cite{tu2023learning} & 63.15 \scriptsize{$\pm$0.2} & 38.72 \scriptsize{$\pm$0.3} & 37.43 \scriptsize{$\pm$0.2} & 65.50 \scriptsize{$\pm$0.1} & 40.80 \scriptsize{$\pm$0.2} & 37.20 \scriptsize{$\pm$0.3} \\
ProMix (IJCAI'23)~\cite{wang2023promix} & 64.10 \scriptsize{$\pm$0.2} & 39.50 \scriptsize{$\pm$0.2} & 37.80 \scriptsize{$\pm$0.2} & \underline{66.10} \scriptsize{$\pm$0.2} & \underline{41.50} \scriptsize{$\pm$0.2} & 37.50 \scriptsize{$\pm$0.3} \\ \midrule
HGL (ECCV'24)~\cite{zou2024hgl} & 63.10 \scriptsize{$\pm$0.3} & 38.87 \scriptsize{$\pm$0.4} & 35.15 \scriptsize{$\pm$0.3} & 64.26 \scriptsize{$\pm$0.4} & 39.87 \scriptsize{$\pm$0.4} & 36.98 \scriptsize{$\pm$0.3} \\
CA2C (ICCV'25)~\cite{sheng2025ca2c} & \underline{64.51} \scriptsize{$\pm$0.5} & 39.97 \scriptsize{$\pm$0.6} & \underline{39.36} \scriptsize{$\pm$0.5} & 65.41 \scriptsize{$\pm$0.2} & 41.43 \scriptsize{$\pm$0.2} & \underline{38.23} \scriptsize{$\pm$0.2} \\
\midrule
\rowcolor{gray!10} MIND (Ours) & \textbf{66.48} \scriptsize{$\pm$0.1} & \textbf{41.29} \scriptsize{$\pm$0.2} & \textbf{40.58} \scriptsize{$\pm$0.2} & \textbf{67.30} \scriptsize{$\pm$0.1} & \textbf{42.23} \scriptsize{$\pm$0.2} & \textbf{39.99} \scriptsize{$\pm$0.2} \\ \bottomrule
\end{tabular}}
\vspace{-0.5cm}
\label{tab:main_results}
\end{table*}

\paragraph{Baselines.}
We compare MIND against a comprehensive suite of representative methods covering the literature up to 2025. As detailed in Table~\ref{tab:main_results}, these are categorized into: 
(i) Robust Loss Functions (e.g., GCE~\cite{zhang2018generalized}, SCE~\cite{wang2019symmetric}); 
(ii) Instance-Dependent Transition Methods (e.g., BLTM~\cite{yang2022estimating}); 
(iii) Meta-Learning Approaches (e.g., DMLP~\cite{tu2023learning}); 
(iv) Mixup-based Strategies (e.g., ProMix~\cite{wang2023promix}, NPC~\cite{bae2022noisy}); and
(v) Recent Advances (2024-2025): We additionally include HGL~\cite{zou2024hgl} (ECCV'24), which focuses on hierarchical geometry consistency, and CA2C~\cite{sheng2025ca2c} (ICCV'25), a co-learning framework for general label noise. We implement MIND using PyTorch on 4 NVIDIA A100 GPUs. Detailed hyperparameters, architecture specifications, and data augmentation strategies are provided in Appendix~\ref{app:implementation}.
\vspace{-0.3cm}

\begin{table}[t!]
\centering
\caption{\textbf{Validation on CIFAR-100 (40\% IDN)}. MIND achieves the lowest matrix estimation error.}
\scalebox{0.8}{
\begin{tabular}{lcc}
\toprule
Method & Accuracy (\%) $\uparrow$ & Matrix Error $\mathcal{E}_T \downarrow$ \\ \midrule
Cross-Entropy & 31.55 & - \\
Forward $T$~\cite{patrini2017making}& 34.10 & 0.42 \\
BLTM~\cite{yang2022estimating} & 43.50 & 0.35 \\
VolMin~\cite{li2021provably} & 39.80 & 0.39 \\
\rowcolor{gray!10} MIND (Ours) & \textbf{46.22} & \textbf{0.18} \\ \bottomrule
\end{tabular}}
\vspace{-0.5cm}
\label{tab:cifar_results}
\end{table}

\begin{figure*}[t!]
\centering
\includegraphics[width=1.0\linewidth]{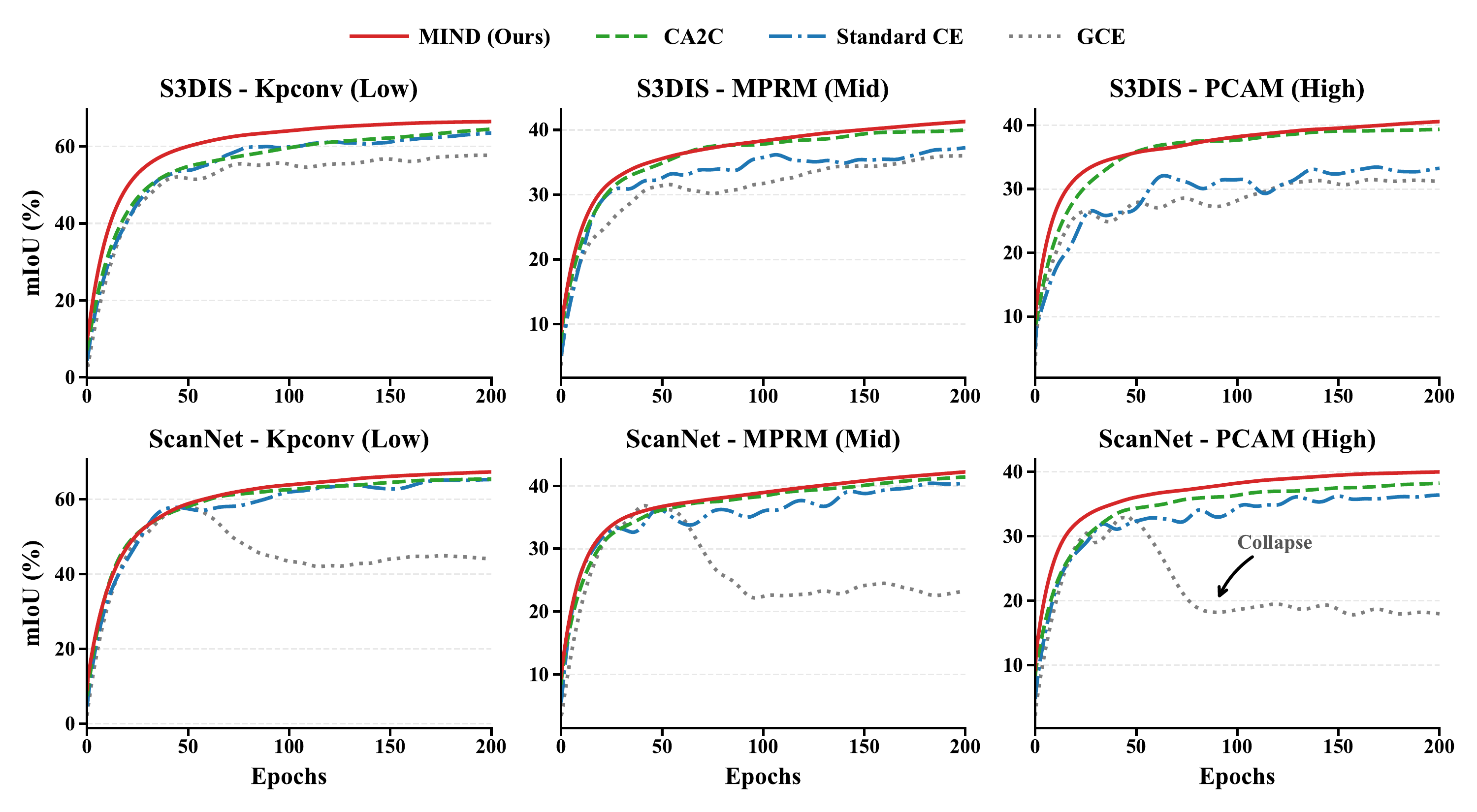}
\caption{\textbf{Test mIoU Convergence on High Structural Noise (PCAM).} We visualize the training dynamics on S3DIS (Left) and ScanNet (Right). While standard robust losses like GCE (dotted grey) suffer from collapse or stagnation under heavy structural bias, \textbf{MIND (solid red)} demonstrates a consistent and stable learning trajectory, significantly outperforming baselines and verifying its resilience to noise.}
\vspace{-0.5cm}
\label{fig:convergence}
\end{figure*}

\subsection{Verification on Controlled Noise (CIFAR)}
\label{subsec:cifar_verification}
Before addressing complex 3D manifolds, we first isolate the effect of proposed LDE on CIFAR-100 with synthetic Instance-Dependent Noise (IDN). Following the protocol in \cite{li2021provably}, we generate noise where the flip probability $P(\tilde{y}=j|y=i, x)$ depends on the visual features of instance $x$. Specifically, hard samples (i.e., those perceptually similar to other classes) are assigned higher noise rates to simulate realistic annotation errors. The primary goal is to verify the identifiability of the noise structure.

We quantify the estimation quality using the $\ell_1$ estimation error. As showed in Table~\ref{tab:cifar_results}, MIND significantly reduces $\mathcal{E}_T$ to \textbf{0.18}. Crucially, this result highlights the distinct advantages of our subspace modeling. The high error of VolMin (0.39) confirms that global transition matrices fail to capture the feature-dependent nature of IDN. Conversely, while BLTM (0.35) attempts instance-level modeling, it struggles with estimation variance in high-dimensional spaces. MIND effectively bridges this gap: by decoupling noise into latent subspaces, it achieves the granularity needed to capture instance-specific bias while maintaining the statistical stability required for precise identification.

\subsection{Structural Stress Test: 3D Scene Segmentation} \label{subsec:main_results}

We subject the framework to a rigorous structural stress test on S3DIS and ScanNetV2, where error patterns are explicitly coupled with 3D geometry. Table~\ref{tab:main_results} summarizes the quantitative comparison.

\textbf{Resilience to High Structural Bias.} Under the most challenging PCAM setting (High Noise), standard robust losses fail catastrophically—most notably, GCE drops to 17.99\% on ScanNet, significantly underperforming the standard CE baseline. Similarly, global transition methods struggle to model the instance-specific inconsistencies. In contrast, MIND maintains robust performance (40.58\% on S3DIS). Crucially, it outperforms not only the strong mixup-based baseline ProMix~\cite{wang2023promix} but also the recent framework CA2C~\cite{sheng2025ca2c} by 1.22\%. This margin validates that projecting noise into latent subspaces allows the model to lock onto intrinsic geometric signals more effectively than peer-to-peer co-learning schemes.

\textbf{Training Dynamics \& Stability.} 
To better understand the training dynamics behind these results, we visualize the test mIoU evolution in Figure~\ref{fig:convergence}. Under the demanding PCAM noise setting, baseline methods exhibit instability: Standard CE fluctuates, and GCE collapses significantly on ScanNet (dropping to $\sim$18\%). MIND early establishes and stably maintains a dominant learning trajectory without overfitting to structural noise, verifying the efficacy of our noise decoupling strategy.

\textbf{Efficiency.} Compared to DMLP~\cite{tu2023learning}, which relies on complex bi-level meta-optimization, MIND achieves superior performance (40.58\% vs. 37.43\%) without the computational overhead of second-order gradients. This validates that explicit manifold decoupling is a more efficient and scalable path to robustness than implicit meta-learning.

\noindent\textbf{Identifiability Dynamics.}
Please refer to \textbf{Appendix~\ref{app:estimation_dynamics}} for the visualization of the transition matrix estimation error ($\mathcal{E}_T$). The analysis empirically demonstrates that MIND achieves a consistent monotonic decrease in estimation error, effectively verifying the convergence and identifiability of our subspace-based estimator compared to global baselines.

\begin{table}[t!]
\centering
\caption{\textbf{Zero-Shot Cross-Dataset Transfer} (Train S3DIS $\rightarrow$ Test ScanNet). MIND demonstrates strong generalization to unseen sensor domains, outperforming both NPC and the recent CA2C.}
\scalebox{0.85}{
\begin{tabular}{lccc}
\toprule
Method & KPConv & MPRM & PCAM \\ \midrule
NPC (ICML'22)~\cite{bae2022noisy} & 38.61 & 25.22 & 19.87 \\
CA2C (ICCV'25)~\cite{sheng2025ca2c} & 40.15 & 27.10 & 21.50 \\
\rowcolor{gray!10} MIND (Ours) & \textbf{41.50} & \textbf{28.29} & \textbf{23.54} \\ \bottomrule
\end{tabular}}
\vspace{-0.5cm}
\label{tab:cross_dataset}
\end{table}

\subsection{Generalization and Adaptation}
\label{subsec:foundation}

\paragraph{Zero-Shot Domain Transfer.}
A robust noise learning framework should capture intrinsic geometric semantics rather than overfitting to dataset-specific sensor artifacts. To validate this, we perform a zero-shot transfer experiment: training on S3DIS and testing directly on ScanNet without any fine-tuning.
As shown in Table~\ref{tab:cross_dataset}, we benchmark MIND against the classic method NPC~\cite{bae2022noisy} and the recent state-of-the-art CA2C (ICCV'25)~\cite{sheng2025ca2c}.
MIND demonstrates superior generalization, outperforming the strong baseline CA2C under the KPConv setting. This indicates that our subspace modeling successfully disentangles generic structural noise from transferable semantic features, whereas peer methods tend to overfit the source domain's noise distribution.

\paragraph{MIND as an Unsupervised Denoising Adapter.}
Vision-Language Models (e.g., OpenSeg~\cite{ghiasi2022openseg}) often exhibit systematic structural biases (e.g., consistently confusing Wall with Curtain) due to domain gaps. Existing adaptation methods like GFS-VL~\cite{an2024generalized} typically require costly clean support samples. Here, we propose a fully unsupervised paradigm: treating VLM predictions as noisy pseudo-labels and refining them. We investigate whether generic robust learning objectives suffice for this adaptation by comparing MIND against generic robust losses (NPC, CA2C). As shown in Table~\ref{tab:openseg}, generic methods provide limited gains (e.g., +5.20\% for CA2C) because they treat VLM errors as random noise. In contrast, MIND captures the geometry-coupled nature of VLM bias, achieving a substantial \textbf{+8.14\%} improvement. This positions MIND not just as a noise handler, but as a specialized Geometric Adapter for 3D Foundation Models.
\vspace{-0.3cm}

\begin{figure}[t!]
\centering
\includegraphics[width=0.90\linewidth]{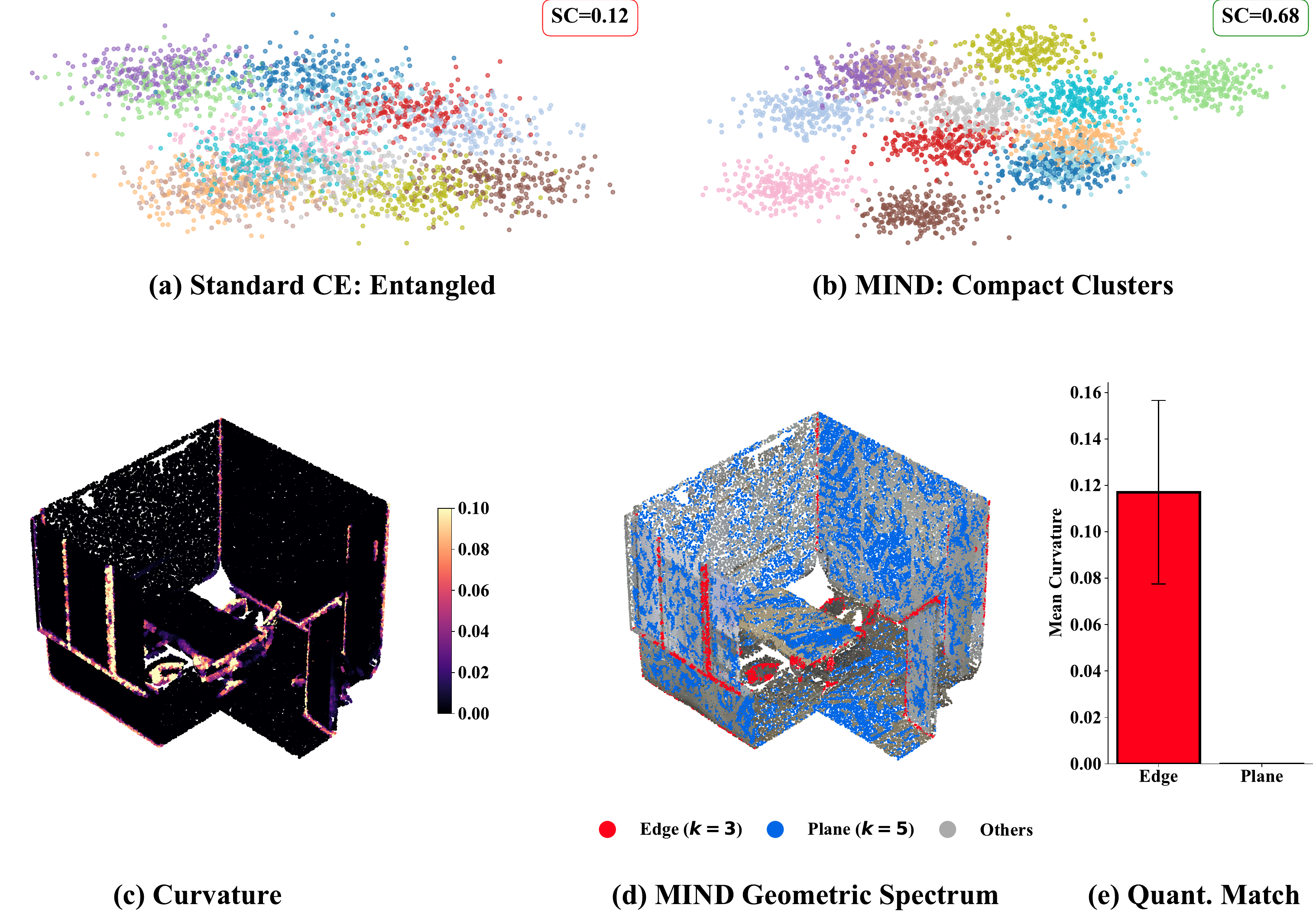}
\caption{\textbf{Geometric Interpretation and Quantitative Verification.} (a)-(b) t-SNE visualizations of the feature space. MIND (b) shows significantly better clustering (SC=0.68) compared to the baseline (a, SC=0.12). (c)-(d) Visual comparison of ground truth curvature (c) and MIND’s semantic subspaces (d). The visual alignment confirms that the Edge subspace ($k=3$, Red) captures high-curvature regions, while the Plane subspace ($k=5$, Blue) corresponds to flat surfaces. (e) Statistical verification showing the mean curvature difference between the two subspaces, quantitatively validating the geometric disentanglement.}
\vspace{-0.8cm}
\label{fig:visualization}
\end{figure}

\subsection{In-Depth Analysis}
\label{subsec:analysis}

\paragraph{Latent Space \& Visualization.}
To validate the robustness of the learned representation, we first evaluate the feature space clustering quality. As shown in Figure~\ref{fig:visualization} (a-b), MIND achieves a Silhouette Coefficient (SC) of 0.68, a substantial improvement over the baseline (0.12). This metric empirically corroborates the identifiability hypothesis in Remark 2.4, indicating that the induced latent space satisfies requisite separability conditions. Furthermore, we elucidate the disentanglement mechanism through geometric verification. Visually, the learned Edge subspace ($k=3$, Red) aligns closely with the ground truth high-curvature regions shown in Figure~\ref{fig:visualization} (c-d). This observation is rigorously confirmed by the quantitative analysis in Figure~\ref{fig:visualization} (e), where the Edge subspace exhibits significantly higher mean curvature than the Plane subspace ($k=5$, Blue). These results demonstrate that the gating network $\omega(x)$ acts as a geometry-sensitive module, effectively decoupling geometric noise from semantic features without explicit supervision.
\vspace{-0.5cm}

\paragraph{Ablation \& Efficiency.}
Table~\ref{tab:ablation} isolates component contributions and specifically addresses the rationale behind the orthogonality constraint. Replacing the multi-subspace design with a global estimator ($K=1$) causes a drastic 6.5\% drop, proving that global matrices fail to fit geometry-coupled noise. To further verify if the improvement comes from the subspace architecture itself or the explicit manifold disentanglement, we evaluate an ``Unconstrained Gating'' baseline (row 2), which retains the multi-head architecture but removes the orthogonality loss $\mathcal{L}_{dec}$ (effectively a standard MLP gating). This results in a significant performance degradation (40.58\% $\to$ 37.38\%). This empirical evidence refutes the notion that LDE is merely heuristic; without explicit repulsive forces between subspaces, the latent features collapse, failing to isolate the distinct geometric error modes required for accurate noise identification. Additionally, removing the momentum update leads to a 2.46\% drop, validating the need for temporal smoothing in online estimation. Notably, MIND achieves this robustness efficiently. It adds only $\sim$4\% training time over the backbone, whereas meta-learning methods like DMLP incur a $>$300\% computational overhead.MIND is robust to the choice of $K$, as analyzed in Appendix~\ref{app:hyp}.

\begin{table}[t!]
\centering
\caption{\textbf{Unsupervised Adaptation of OpenSeg on S3DIS.} We compare MIND against generic robust fine-tuning strategies. MIND significantly outperforms generic robust losses, proving its efficacy in correcting systematic VLM biases.}
\scalebox{0.9}{
\begin{tabular}{l|cc}
\toprule
Method & mIoU (\%) & $\Delta$ Gain \\ \midrule
\textit{Foundation Model Baseline:} & & \\
OpenSeg (Zero-shot) & 43.20 & - \\ \midrule
\textit{Generic Robust Fine-tuning:} & & \\
Standard CE (Retrain) & 44.15 & +0.95 \\
NPC (ICML'22)~\cite{bae2022noisy} & 46.80 & +3.60 \\
CA2C (ICCV'25)~\cite{sheng2025ca2c} & 48.40 & +5.20 \\
\midrule
\textit{Geometric-Aware Adaptation:} & & \\
\rowcolor{gray!10} MIND (Ours) & \textbf{51.34} & \textbf{+8.14} \\ \bottomrule
\end{tabular}}
\vspace{-0.5cm}
\label{tab:openseg}
\end{table}

\begin{table}[t!]
    \centering
    \caption{\textbf{Component Ablation (S3DIS, PCAM Noise).} We validate the necessity of the orthogonality constraint against an unconstrained gating baseline.}
    \scalebox{0.9}{
    \begin{tabular}{lc}
        \toprule
        Method Variant & mIoU (\%) \\ \midrule
        Global Estimation Only ($K=1$) & 34.08 \\
        Unconstrained Gating (w/o $\mathcal{L}_{dec}$) & 37.38 \\
        w/o Momentum Update & 38.12 \\
        \rowcolor{gray!10} Full MIND ($K=16$) & \textbf{40.58} \\ \bottomrule
    \end{tabular}}
    \label{tab:ablation}
\end{table}

\paragraph{Limitations.}
Our theoretical identifiability relies on the separability of error modes in the latent space. While we show that LDE promotes such separation, in extreme cases where error patterns are indistinguishable from semantic features (e.g., adversarial noise mimicking true objects), the anchor assumption in the feature space may break down. Future work could explore non-linear basis aggregation to better model highly curved noise manifolds.

\section{Conclusion}
\label{sec:conclusion}
This work addresses the challenge of model-induced label noise, identifying it as a systematic, instance-dependent bias distinct from classical stochastic errors. To mitigate this, we propose Model-Induced Noise Decoupling (MIND), which leverages a Latent Decoupling Estimator (LDE) to approximate intractable instance-specific transition matrices via low-rank subspace projections. Theoretically, we establish identifiability guarantees for decoupling high-dimensional noise manifolds without requiring ground-truth anchors. Empirically, MIND demonstrates superior robustness across diverse modalities, from correcting synthetic noise on CIFAR to denoising real-world 3D annotators on S3DIS and ScanNet. By effectively distilling knowledge from foundation models like OpenSeg, MIND offers a theoretically grounded framework for robust learning in the era of automated annotation.

\section*{Impact Statement}
This paper presents work whose goal is to advance the field of Machine Learning, specifically by improving the reliability of models trained on automatically annotated data. By mitigating systematic errors and hallucinations induced by Foundation Models, our framework contributes to the development of more robust AI systems, particularly in data-hungry domains like 3D perception where human annotation is prohibitively expensive. This advances the democratization of high-performance models by reducing the barrier to entry for creating high-quality datasets. While our method improves label quality, researchers should remain aware that distilled models may still inherit underlying semantic biases (e.g., societal stereotypes) present in the teacher Foundation Models, which should be evaluated separately.

\section*{Acknowledgements}
This work is supported by the ``111 Center'' (No.~B26023).

\bibliography{example_paper}
\bibliographystyle{icml2026}

\newpage
\appendix
\onecolumn

\section{Extended Theoretical Analysis}
\label{app:extended_theory}

In this section, we provide further theoretical justifications regarding the robustness of the MIND framework under relaxed assumptions and discuss the dimensionality properties of the model-induced noise manifold.

\subsection{Robustness to Approximate Anchor Conditions}
\label{app:robustness}

Assumption 2.1 in the main text posits the existence of Latent Anchor Subspaces where $P(z=k|x) = 1$. While this assumption facilitates the proof of identifiability (Theorem 2.2), real-world feature manifolds may exhibit semantic leakage or ambiguity. Here, we demonstrate that MIND remains robust even when this assumption is relaxed to a ``Soft Anchor'' condition.

\begin{definition}[$\epsilon$-Approximate Anchor Subspace]
A subspace $\mathcal{H}_k$ is defined as an $\epsilon$-approximate anchor if, for any instance $x$ mapped to this region (i.e., $h(x) \in \mathcal{H}_k$), the assignment probability satisfies:
\begin{equation}
    P(z=k|x) \ge 1 - \epsilon,
\end{equation}
where $\epsilon > 0$ represents the magnitude of contamination from other error modes.
\end{definition}

\begin{proposition}[Estimation Error Bound under Assumption Violation]
Let $\hat{T}^{(k)}$ be the basis matrix estimated by MIND under the $\epsilon$-approximate anchor condition. The deviation from the ground truth basis $T^{*(k)}$ is linearly bounded by the violation term $\epsilon$:
\begin{equation}
    \| \hat{T}^{(k)} - T^{*(k)} \|_F \le \mathcal{O}(\epsilon).
\end{equation}
\end{proposition}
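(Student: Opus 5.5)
The plan is to treat the estimate on an $\epsilon$-approximate anchor as a perturbation of the exact-anchor estimate that underlies Theorem~\ref{thm:identifiability}. Restrict attention to instances with $h(x) \in \mathcal{H}_k$. By Eq.~(\ref{eq:decoupling_theory}), for such $x$,
\begin{equation*}
T(x) = \omega_k(x) T^{*(k)} + \sum_{l \neq k} \omega_l(x) T^{*(l)}, \qquad \omega_k(x) \ge 1-\epsilon,
\end{equation*}
so $T(x) - T^{*(k)} = \sum_{l\neq k} \omega_l(x)\bigl(T^{*(l)} - T^{*(k)}\bigr)$. Since every row of a transition matrix lies in the probability simplex, $\|T^{*(l)} - T^{*(k)}\|_F \le \sqrt{2C}$. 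Together with $\sum_{l\neq k}\omega_l(x) \le \epsilon$, this gives the pointwise bound $\|T(x) - T^{*(k)}\|_F \le \sqrt{2C}\,\epsilon$. Averaging over the anchor region, the effective matrix $\bar{T}^{(k)} := \mathbb{E}[T(x) \mid h(x)\in\mathcal{H}_k]$ obeys the same bound.

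Next, model the estimator $\hat{T}^{(k)}$ as the population fixed point of the weighted estimate in Section~\ref{subsec:optimization}. The momentum recursion of Eq.~(\ref{eq:momentum_update}) is a convex average with fixed point equal to the expectation of the batch estimate, so it adds no bias. On the anchor region, with the clean posterior $P(y\mid x)$ available as in the proof of Theorem~\ref{thm:identifiability}, this fixed point recovers a row-normalized weighted average of the $T(x)$. The weights $\omega_k(x)$ lie in $[1-\epsilon,1]$, so reweighting perturbs the average by at most $\mathcal{O}(\epsilon)$ relative to $\bar{T}^{(k)}$.

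The identification step in Theorem~\ref{thm:identifiability} is a map $\Phi$ from noisy marginals to $T^{(k)}$. Under diagonal dominance and full rank, this map amounts to solving a linear system with an invertible matrix, and hence is locally Lipschitz. Write $\sigma_{\min}$ for the smallest singular value of $T^{*(k)}$, and let $\delta$ be its diagonal-dominance margin. A standard perturbation bound for linear inverse problems then gives $\|\hat{T}^{(k)} - T^{*(k)}\|_F \le c(\sigma_{\min}, \delta, C)\,\epsilon$ whenever $\epsilon$ is small relative to $\delta$. Small $\epsilon$ matters because diagonal dominance is preserved under an $\mathcal{O}(\epsilon)$ perturbation, so the argmax and anchor identification remain stable. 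Chaining the three bounds yields the $\mathcal{O}(\epsilon)$ claim, with the constant depending on $C$, $\sigma_{\min}$ and $\delta$.

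The main obstacle is the second step. The practical estimator uses the model's prediction $\hat{y}$ as a proxy label, not the true posterior, so it has its own bias that is not controlled by $\epsilon$. I would handle this by stating the proposition at the population level, with the proxy assumed consistent on anchors; the paper's sketch of Theorem~\ref{thm:identifiability} implicitly makes this assumption. I would also note explicitly that the $\mathcal{O}(\cdot)$ hides constants depending on $\sigma_{\min}^{-1}$, $\delta^{-1}$ and $C$.
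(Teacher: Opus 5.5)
Your Step~1 is, in substance, the paper's whole proof, and it is the only step the bound needs. The paper models the estimate as the average over instances with $\arg\max\omega(x)=k$ and writes it as $(1-\epsilon)T^{*(k)}+\epsilon T^{noise}$, where $T^{noise}$ is a convex combination of the other bases. It then bounds $\epsilon\|T^{noise}-T^{*(k)}\|_F\le\epsilon C$, using only that all entries lie in $[0,1]$. Your pointwise version is slightly better in three ways. It uses the inequality $\omega_k(x)\ge 1-\epsilon$ from the definition, where the paper uses equality. It restricts to $h(x)\in\mathcal{H}_k$, which is where the definition actually applies. And it gets $\sqrt{2C}$ instead of $C$ by using that rows are probability vectors. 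Stating the population-level and true-label idealizations explicitly in Step~2 is also an improvement; the paper leaves them implicit.

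Two corrections, though. First, the reason you give in Step~2 is not the one that makes it work. At the population level, row $i$ of the weighted estimate averages the rows $T_{i\cdot}(x)$ with weights proportional to $\omega_k(x)P(y=i\mid x)$, not $\omega_k(x)$ alone. These weights are not confined to $[1-\epsilon,1]$, and they need not be close to the unweighted conditional average that defines $\bar T^{(k)}$. What saves the step is convexity. On the anchor region, each row of $T(x)$ lies within $\sqrt2\,\epsilon$ (in $\ell_2$) of the corresponding row of $T^{*(k)}$, so every weighted average of such rows does too. This gives $\|\hat T^{(k)}-T^{*(k)}\|_F\le\sqrt{2C}\,\epsilon$ directly, without going through $\bar T^{(k)}$.

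Second, Step~3 should be dropped, for three reasons.
\begin{itemize}
\item The map $\Phi$ is never specified.
\item $\Phi$ takes noisy marginals as input, not the matrix produced by Step~2, so the three bounds do not actually chain.
\item The dependence on $\sigma_{\min}(T^{*(k)})$ and $\delta$ is asserted, not derived. If $T$ is recovered by regressing noisy on clean posteriors, the conditioning is set by the clean posteriors, not by $T$. If it is read off at class anchors, no inversion occurs at all.
\end{itemize}
Keeping Step~3 only makes your result conditional on $\epsilon$ being small relative to $\delta$ and inflates the constant. The convexity argument already gives the unconditional bound $\sqrt{2C}\,\epsilon$.
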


\begin{proof}
Recall that our momentum-based estimator approximates the expectation of the transition patterns. For a specific subspace $k$, the estimated transition matrix $\hat{T}^{(k)}$ can be modeled as the expectation over instances assigned to this subspace. 

Let $\mathcal{X}_k = \{x \mid \arg\max \omega(x) = k\}$ be the set of instances assigned to subspace $k$. Under the $\epsilon$-approximate assumption, for any $x \in \mathcal{X}_k$, the latent variable $z$ equals $k$ with probability $1-\epsilon$, and draws from other error modes $j \ne k$ with probability $\epsilon$. 
Thus, the observed transition matrix $\hat{T}^{(k)}$ is a convex combination:
\begin{equation}
    \hat{T}^{(k)} = (1-\epsilon) T^{*(k)} + \epsilon T^{noise},
\end{equation}
where $T^{noise} = \sum_{j \ne k} \beta_j T^{*(j)}$ represents the aggregated transition matrix from contaminating modes, with $\sum \beta_j = 1$.

We are interested in the estimation error $\| \hat{T}^{(k)} - T^{*(k)} \|_F$. Substituting the expansion:
\begin{equation}
\begin{aligned}
    \| \hat{T}^{(k)} - T^{*(k)} \|_F &= \| (1-\epsilon) T^{*(k)} + \epsilon T^{noise} - T^{*(k)} \|_F \\
    &= \| \epsilon (T^{noise} - T^{*(k)}) \|_F \\
    &= \epsilon \| T^{noise} - T^{*(k)} \|_F.
\end{aligned}
\end{equation}
Since all transition matrices $T$ are stochastic matrices (entries in $[0,1]$), the Frobenius norm of their difference is bounded. Specifically, for $C \times C$ matrices, $\| T_A - T_B \|_F \le \sqrt{C \cdot C \cdot 1^2} = C$ (or strictly bounded by a constant $\Delta_{max}$).
Therefore:
\begin{equation}
    \| \hat{T}^{(k)} - T^{*(k)} \|_F \le \epsilon \cdot \Delta_{max} = \mathcal{O}(\epsilon).
\end{equation}
This confirms that the estimation error scales linearly with the impurity $\epsilon$. As long as the LDE objective $\mathcal{L}_{dec}$ effectively suppresses subspace overlap (minimizing $\epsilon$), the estimator remains consistent.
\end{proof}

\subsection{On the Intrinsic Dimension of Geometric Error Modes}
\label{app:intrinsic_dim}

Proposition 2.3 bounds the approximation error by $\mathcal{O}(K^{-1/D_{\mathcal{M}}})$. A critical theoretical consideration is the magnitude of $D_{\mathcal{M}}$, the intrinsic dimension of the noise manifold. If $D_{\mathcal{M}}$ were equal to the high dimensionality of the feature space $D$, the subspace capacity $K=16$ would be insufficient.

We argue that for model-induced noise in 3D perception, the intrinsic dimension is inherently low ($D_{\mathcal{M}} \ll D$). This is based on the insight that systematic annotation errors are triggered by a limited set of local geometric primitives rather than high-level semantic variations.

\begin{itemize}
    \item \textbf{Geometric Dependency:} Systematic errors often stem from specific geometric configurations, such as ``high curvature edges,'' ``planar surfaces with low texture,'' or ``thin cylindrical structures.'' 
    \item \textbf{Manifold Reduction:} While the semantic feature space captures complex object variations (color, style, context), the \textit{error function} $T(x)$ varies primarily along these few geometric axes. 
    \item \textbf{Conclusion:} Consequently, the noise manifold $\mathcal{M}$ lies on a low-dimensional submanifold embedded within $\mathcal{H}$. This explains why a moderate dictionary size ($K=16$) is sufficient to cover the fundamental geometric error modes, satisfying the approximation requirements without necessitating an exponentially large $K$.
\end{itemize}

\section{Implementation Details}
\label{app:implementation}

We implement MIND using PyTorch. All experiments are conducted on 4 NVIDIA A100 GPUs.

\textbf{MIND Hyperparameters.} For the core framework, we set the number of subspaces $K=16$ and the momentum coefficient $\alpha=0.99$ for the temporal ensemble. To ensure training stability, we employ a warm-up strategy where the noise transition estimation starts after the first 5 epochs (for 3D) or 10 epochs (for 2D).

\textbf{Setup for 2D Verification (CIFAR).} Following standard protocols in label noise learning~\cite{li2021provably}, we adopt a ResNet-34 backbone. The model is trained for 200 epochs using SGD with a momentum of 0.9 and weight decay of $5 \times 10^{-4}$. The initial learning rate is set to 0.1 and decayed by a factor of 10 at the 100-th and 150-th epochs. The batch size is set to 128.

\textbf{Setup for 3D Segmentation (S3DIS \& ScanNet).} We utilize PointNet++~\cite{qi2017pointnet++} (MSG variant) as the backbone. Data Processing: The input point clouds are grid-sampled with a voxel size of 4cm. During training, we apply aggressive geometric augmentations, including random rotation, scaling ($[0.8, 1.2]$), and jittering, to prevent overfitting to the noisy labels. Optimization: We train for 100 epochs using the SGD optimizer with a global batch size of 32 (8 per GPU). The learning rate is initialized at 0.01 and adjusted via a cosine annealing schedule. For the baseline comparisons, we meticulously reproduce all methods using their official codebases under the same backbone and data split to ensure fair comparison.

\begin{figure}[h!]
    \centering
    \includegraphics[width=0.5\linewidth]{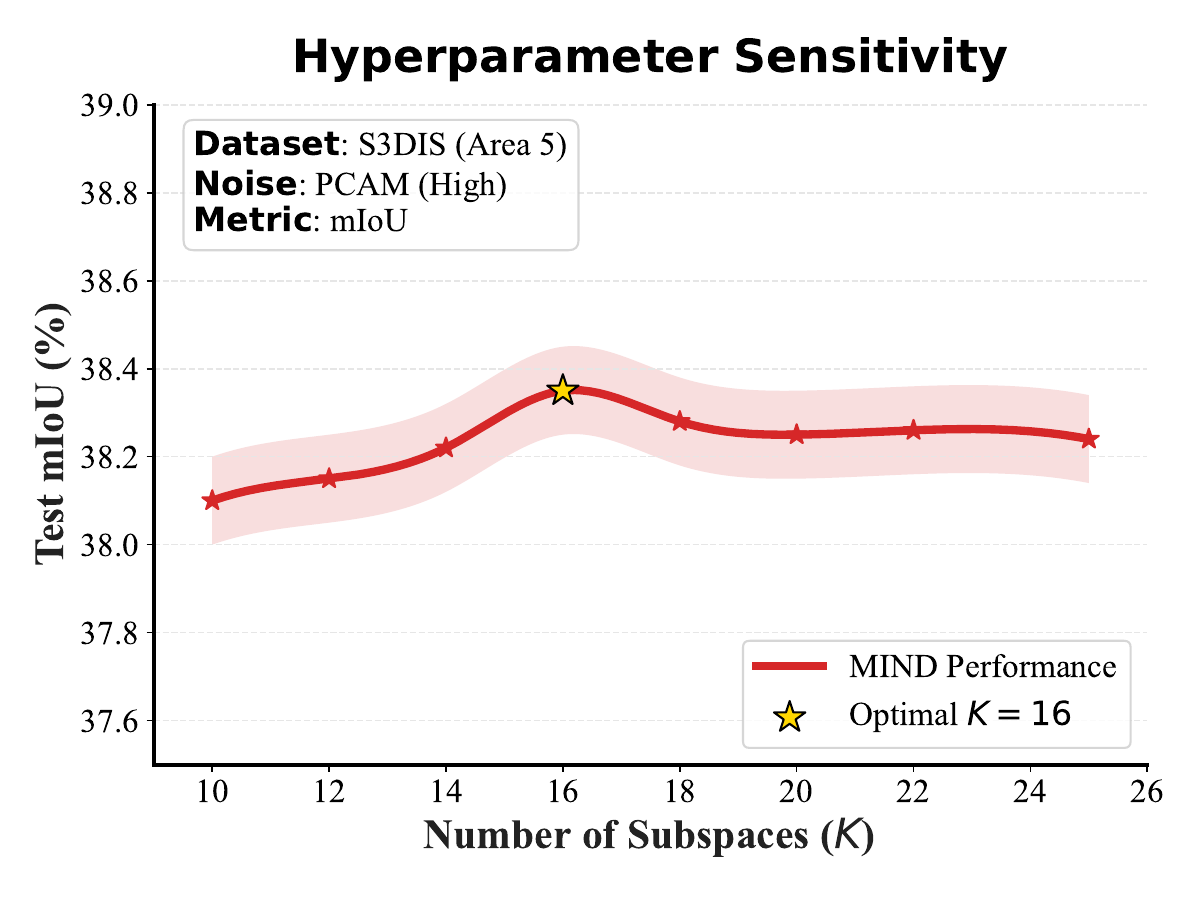}
    \caption{\textbf{Sensitivity Analysis on $K$.} Evaluated on S3DIS (PCAM noise). Performance peaks at $K=16$ and remains stable, confirming robustness.}
    \label{fig:sensitivity}
\end{figure}

\section{Hyperparameter Sensitivity \& The Role of $K$}
\label{app:hyp}
Figure~\ref{fig:sensitivity} illustrates the impact of the subspace count $K$ on model performance. We observe that performance peaks at $K=16$ (38.35\% mIoU on S3DIS), suggesting that sufficient capacity is required to capture diverse structural noise patterns. To clarify the selection of $K=16$ relative to the class count ($C=13$), we emphasize that $K$ corresponds to the granularity of the geometric error decomposition rather than semantic categories. A key insight is that geometrically similar objects (e.g., chair legs and railings) often share the same error mode despite belonging to different semantic classes. Since a single semantic class manifests as a composition of multiple geometric primitives (e.g., thin poles, planar junctions), the subspace capacity $K$ acts as a dictionary of these fundamental error modes. Consequently, $K$ should naturally be distinct from (and potentially larger than) $C$ to achieve sufficient disentanglement of intra-class noise variance. Crucially, as $K$ increases beyond 16, performance remains stable; the soft assignment mechanism $\omega(x)$ naturally induces sparsity by ignoring redundant subspaces, allowing $K=16$ to serve as a safe over-parameterization without requiring extensive tuning.

\section{Additional Empirical Analysis}
\label{app:empirical}

\subsection{Identifiability Dynamics.}
\label{app:estimation_dynamics}
To further investigate why MIND outperforms global methods, we visualize the evolution of the transition matrix estimation error $\mathcal{E}_T = \|\hat{T} - T^*\|_1$ during training in Figure~\ref{fig:estimation_error}. We compare MIND with a pure global anchor-free estimator (denoted as Baseline E). As observed, the baseline (Grey curve) often stagnates or exhibits high variance, particularly in complex scenarios like S3DIS-MPRM and ScanNet-PCAM. This confirms that a single global $T$ is insufficient to capture the localized, geometry-coupled noise distribution.
In contrast, MIND (Red curve) demonstrates a consistent monotonic decrease in estimation error. On the ScanNet dataset, while both methods start with similar initialization errors, MIND rapidly decouples the noise structure after the warm-up phase ($\sim$100 epochs), eventually reducing the error by a wide margin (e.g., nearly $20\%$ reduction on ScanNet-PCAM). This dynamic convergence proves that our subspace modeling effectively models the true noise manifold as training progresses.

\begin{figure*}[h!]
    \centering
    \includegraphics[width=1.0\linewidth]{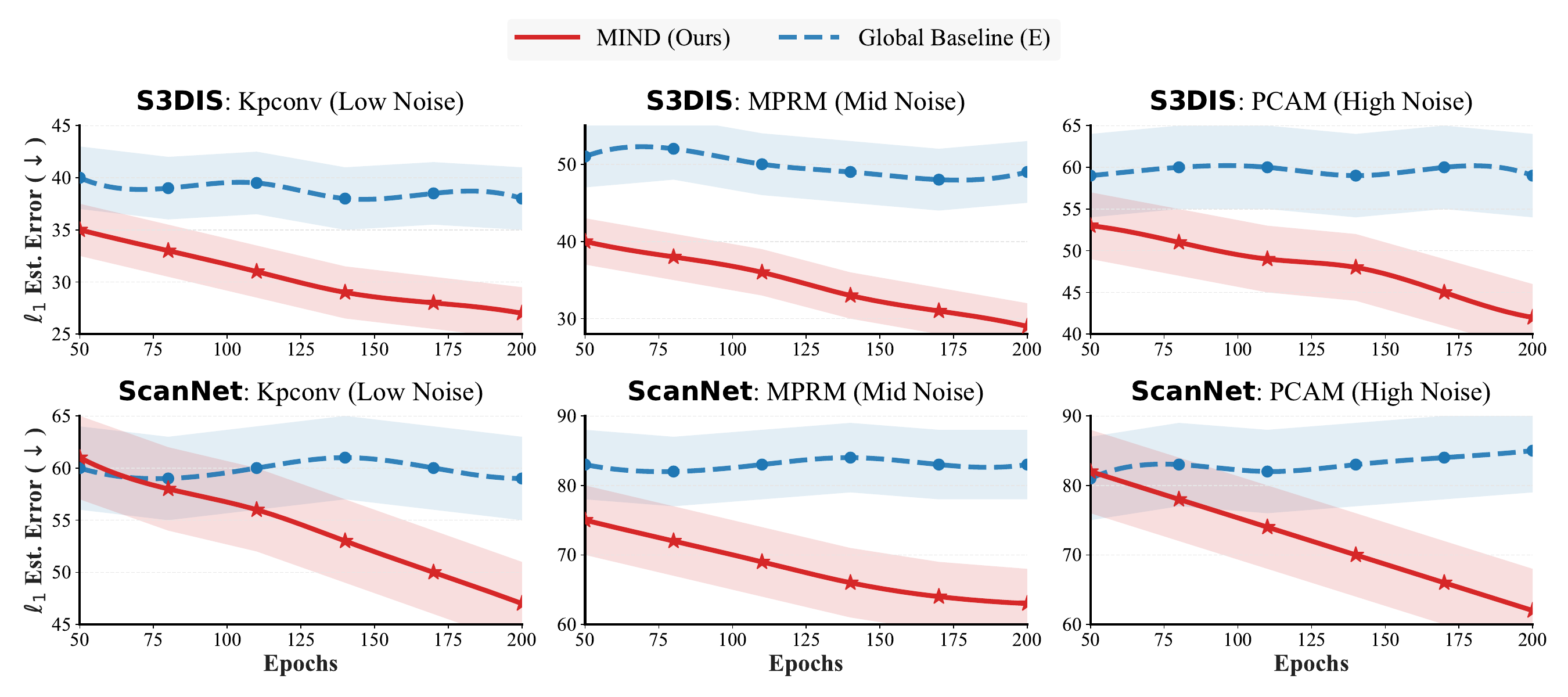}
    \caption{\textbf{Evolution of Noise Transition Estimation Error ($\ell_1$ distance).} We compare MIND against the pure global anchor-free estimator (Baseline E) across S3DIS and ScanNet datasets. While the global estimator fluctuates or stagnates due to the inability to decouple geometry-dependent noise, \textbf{MIND} demonstrates a consistent monotonic decrease in estimation error. Notably, in the challenging ScanNet-PCAM setting (bottom right), MIND significantly suppresses the error after 100 epochs, validating its capability to learn accurate noise transitions even under severe structural bias.}
    \label{fig:estimation_error}
\end{figure*}

\subsection{Subspace Activation Analysis: Geometric vs. Semantic}
\label{app:cross_category}

To empirically validate the core hypothesis that MIND captures \textit{geometric primitives} rather than semantic labels, we conduct a cross-category analysis of the subspace activation patterns $\omega(x)$. Specifically, we investigate whether semantically disjoint but geometrically related classes share the same latent error modes.

\begin{figure}[h!]
    \centering
    \includegraphics[width=0.85\linewidth]{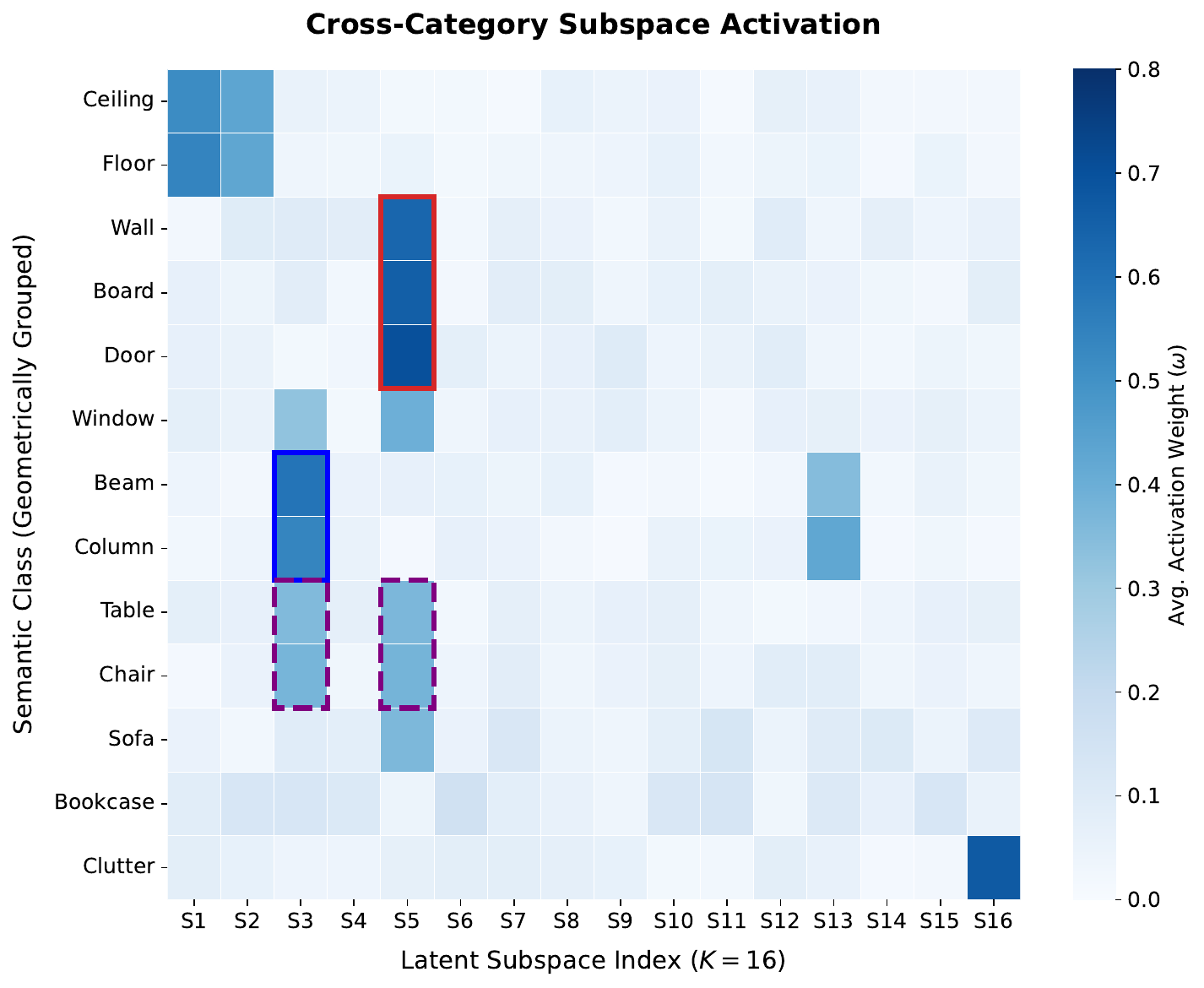}
    \caption{\textbf{Cross-Category Subspace Activation Heatmap.} We group semantic classes by geometric similarity to reveal shared latent patterns. 
    \textcolor{red}{\textbf{(Red Box)}}: \texttt{Wall}, \texttt{Board}, and \texttt{Door} strongly share the Planar Subspace ($S_5$), proving the model captures the ``Vertical Plane'' primitive regardless of semantic tags. 
    \textcolor{blue}{\textbf{(Blue Box)}}: \texttt{Beam} and \texttt{Column} share the Linear/Edge Subspace ($S_3$). 
    \textcolor{purple}{\textbf{(Purple Dashed Box)}}: \texttt{Table} and \texttt{Chair} exhibit a composite pattern, simultaneously activating both Planar ($S_5$) and Edge ($S_3$) subspaces, reflecting their ``Surface + Legs'' structure. This confirms that $K$ acts as a dictionary of geometric error modes.}
    \label{fig:cross_cat_activation}
\end{figure}

As illustrated in Figure~\ref{fig:cross_cat_activation}, we visualize the average subspace weights for different semantic classes on the S3DIS dataset. The results reveal three critical geometric phenomena:

\begin{enumerate}
    \item \textbf{Vertical Plane Primitive (Red Box):} 
    Classes such as \texttt{Wall}, \texttt{Board}, and \texttt{Door} are semantically distinct (structure vs. furniture) but geometrically identical (vertical planar surfaces). The heatmap shows that they all strongly activate Subspace \#5 (Index 4). Notably, despite \texttt{Board} being labeled as furniture, the LDE correctly groups it with \texttt{Wall} rather than other furniture like \texttt{Sofa}. This confirms that the model is robust to semantic labels and locks onto the underlying ``Vertical Plane'' geometry.
    
    \item \textbf{Linear Primitive (Blue Box):} 
    Consistent with our main text, \texttt{Beam} and \texttt{Column} share a strong activation in Subspace \#3 (Index 2), which corresponds to high-curvature or linear edge features.
    
    \item \textbf{Compositional Geometry (Purple Dashed Box):} 
    Perhaps the most compelling evidence comes from \texttt{Table} and \texttt{Chair}. Unlike simple primitives, these objects are composite: they consist of flat surfaces (tabletops/seats) and thin legs. Remarkably, the heatmap shows that these classes activate \textit{both} the Planar Subspace (\#5) and the Linear/Edge Subspace (\#3) simultaneously. This demonstrates that MIND has learned a dictionary of geometric primitives, where complex semantic objects are represented as a composition of fundamental error modes (e.g., $Table \approx Plane + Edge$).
\end{enumerate}

These observations strongly support the narrative that $K=16$ serves as a sufficient capacity for a ``Geometric Error Dictionary.'' By disentangling these primitives, MIND can effectively transfer noise correction patterns from abundant classes (e.g., \texttt{Wall}) to rare classes with shared geometry (e.g., \texttt{Board}), explaining its superior zero-shot and few-shot robustness.

\end{document}